\def\ps@headings{%
\def\@oddhead{\mbox{}\scriptsize\rightmark \hfil \thepage}%
\def\@evenhead{\scriptsize\thepage \hfil \leftmark\mbox{}}%
\def\@oddfoot{}%
\def\@evenfoot{}}
\makeatother \pagestyle{headings}
\def\boxit#1{\vbox{\hrule\hbox{\vrule\kern3pt
        \vbox{\kern3pt#1\kern3pt}\kern3pt\vrule}\hrule}}
\def\reals{ { {\rm  I \kern-0.15em R }  } }
\def\complex{ {\,{{\rm C} \kern-0.50em \raise0.20ex {  |}}\, }}
\def\Rbf{{\bf R}}
\def\be{\begin{equation}}
\def\ee{\end{equation}}
\def\scalefig#1{\epsfxsize #1\textwidth}
\def\Rxx{\Rbf_{\ssstyle X\kern-.1em X}}
\let\ssstyle=\scriptscriptstyle
\def\etal{{\it et al. \/}}
\def\Kout{\setbox1=\hbox{\Huge\bf K}\hbox to
1.05\wd1{\hspace{.05\wd1}
\def\Sout{\setbox1=\hbox{\Huge\bf S}\hbox to 1.05\wd1{\hspace{.05\wd1}

\def\scalefig#1{\epsfxsize #1\textwidth}
\def\nn{{\nonumber}}

\newtheorem{lemma}{Lemma}
\newtheorem{theorem}{Theorem}

\usepackage{fancyhdr}
\usepackage{amsmath}
\rfoot{Draft}


\usepackage{authblk}

\title{Anomaly Detection in Hierarchical Data Streams under Unknown Models}
\author[1]{\small Sattar Vakili}
\author[1]{\small Qing Zhao}
\author[2]{\small Chang Liu}
\author[2]{\small Chen-Nee Chuah}
\affil[1]{\small School of Electrical and Computer Engineering, Cornell University, {\{sv388,qz16\}@cornell.edu }}
\affil[2]{Electrical and Computer Engineering Department, University of California, Davis, \{chuah,cchliu\}@ucdavis.edu}

\begin{document}




\maketitle

\begin{abstract}

We consider the problem of detecting a few targets among a large number of hierarchical data streams. The data streams are modeled as random processes with unknown and potentially heavy-tailed distributions. The objective is an active inference strategy that determines, sequentially, which data stream to collect samples from in order to minimize the sample complexity under a reliability constraint. We propose an active inference strategy that induces a biased random walk on the tree-structured hierarchy based on confidence bounds of sample statistics. We then establish its order optimality in terms of both the size of the search space (i.e., the number of data streams) and the reliability requirement. The results find applications in hierarchical heavy hitter detection, noisy group testing, and adaptive sampling for active learning, classification,  and stochastic root finding.

\end{abstract}

%

\section{Introduction}\label{Sec:Intro}

We consider the problem of detecting a few targets with abnormally high mean values among a large number of data streams. Each data stream is modeled as a stochastic process with an unknown and potentially heavy-tailed distributions. The stochastic nature of the data streams may be due to the inherent randomness of the underlying phenomenon or the noisy response of the measuring process. The number of targets is unknown. The objective is to identify all targets (if any) or declare there is no target to meet a required detection accuracy with a minimum number of samples.

Inherent in a number of applications (see Sec.~\ref{appl}) is a tree-structured hierarchy among the large number of data streams. With each node representing a data stream, the tree structure encodes the following relationship: the abnormal mean of a target leads to an abnormal mean in every ancestor of the target (i.e., every node on the shortest path from this target to the root of the tree).  We illustrate in Fig.~\ref{Fig:BT} a special case of a binary-tree hierarchy.

Targets are of two types: leaf-level targets and hierarchical targets. A leaf-level target is a leaf (level $0$) of the tree whose mean is above a given threshold. Hierarchical targets are defined recursively in an ascending order of the level of the tree.  Specifically, an upper-level node with an anomalous mean is a (hierarchical) target if its mean remains above a given threshold after excluding all its target descendants (if any). Otherwise, this upper-level node is only a reflecting point for merely being an ancestor of a target (see Fig.~\ref{Fig:BT}).

The objective of the problem is to detect all targets quickly and reliably by fully exploiting the hierarchical structure of the data streams. Specifically, we seek an active inference strategy
that determines, sequentially, which node on the tree to probe and when to terminate the search
in order to minimize the sample complexity for a
given level of detection reliability. We are particularly interested in strategies that achieve a sublinear scaling of the sample
complexity with respect to the number of data streams. In other words, accurate detection
can be achieved by examining only a diminishing fraction of the search space as the search space
grows.

\begin{figure}[htbp]
\centering
\begin{psfrags}
\psfrag{a}[c]{\small$l=3$}
\psfrag{b}[c]{\small$l=2 $}
\psfrag{c}[c]{\small$ l=1$}
\psfrag{d}[c]{\small$ l=0$}
\psfrag{e}[c]{\scriptsize$(1,0)$}
\psfrag{f}[c]{\scriptsize$(2,0)$}
\psfrag{g}[c]{\scriptsize$(3,0)$}
\psfrag{h}[c]{\scriptsize$ (4,0)$}
\psfrag{i}[c]{\scriptsize$(5,0) $}
\psfrag{j}[c]{\scriptsize$(6,0) $}
\psfrag{k}[c]{\scriptsize$(7,0) $}
\psfrag{l}[c]{\scriptsize$(8,0) $}
\psfrag{m}[c]{\scriptsize$(1,1) $}
\psfrag{n}[c]{\scriptsize$(2,1) $}
\psfrag{o}[c]{\scriptsize$(3,1) $}
\psfrag{p}[c]{\scriptsize$(4,1) $}
\psfrag{q}[c]{\scriptsize$(1,2) $}
\psfrag{r}[c]{\scriptsize$(2,2) $}
\psfrag{s}[c]{\scriptsize$(1,3) $}
\psfrag{z}[c]{\scriptsize$(1,3) $}
\scalefig{0.8}\epsfbox{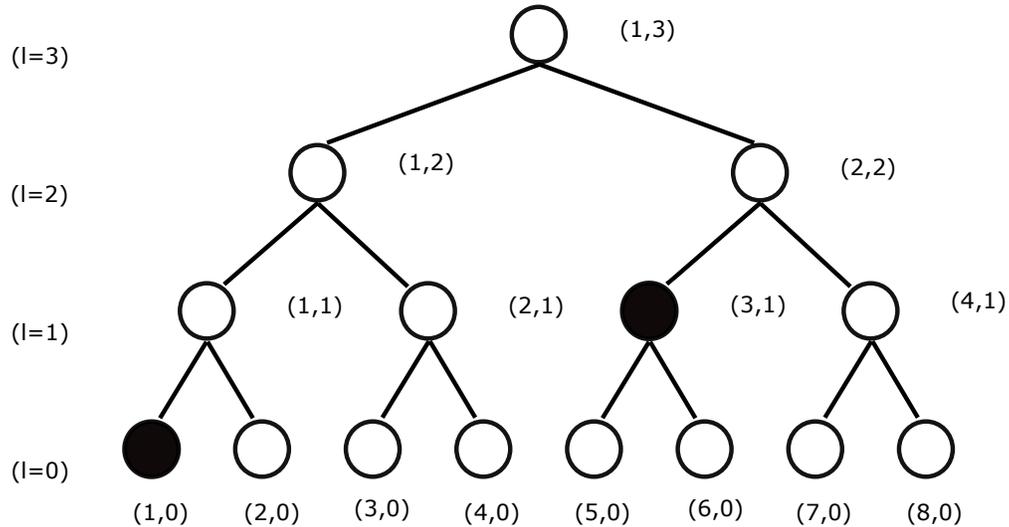}\caption{A tree-structured hierarchy with $l$ denoting the level of the tree and $(k,l)$ the $k$th node on the $l$-th level ($(1,0)$ is a leaf-level target, $(3,1)$ is a hierarchical target. Nodes $(1,1)$, $(1,2)$, $(1,3)$, and $(2,2)$ are reflecting points).
 .} \label{Fig:BT}
\end{psfrags}

\end{figure}

\subsection{Applications}\label{appl}

The above general problem of detecting abnormal mean values in hierarchical data streams arises in a number of active inference and learning applications in networking and data analytics. We give below several representative examples to make the formulation of the problem concrete.

{\bf{Heavy hitter and hierarchical heavy hitter detection:}}~In Internet and other communication and financial networks, it is a common observation that a small
number of flows, referred to as heavy hitters (HH), account for the most of the total traffic~\cite{Thompson}. Quickly identifying the heavy hitters is thus crucial to network
stability and security.
With limited sampling resources at the router, however,  maintaining a packet count of each individual flow is highly inefficient, if
not infeasible. The key to an efficient solution is to consider prefix aggregation based on the source or destination IP addresses. This naturally leads to a binary tree structure with all targets (HHs) at the leaf level.

A more complex version of the problem is hierarchical
heavy hitter (HHH) detection, in which the search for flows with abnormal volume extends to aggregated flows. In other words, there exist hierarchical targets. HHH detection is of particular interest in detecting \emph{distributed} denial-of-service attacks \cite{HHH1}.

%
%
%

{\bf{Noisy group testing:}}~In group testing, the objective is
to identify a few defective items in a large population
by performing tests on subsets of items. Each group test gives a binary
outcome, indicating whether the tested group contains any defective
items. The problem was first motivated by the blood-test screening of
draftees during World War II, for which Robert Dorfman originated the
idea of testing blood samples pooled from a group of people~\cite{Dorfman}.  Since then, the problem has found a wide range of applications, including idle channel detection~\cite{channel}, network tomography~\cite{tomo}, detecting malicious users and attackers~\cite{mali1,mali2}, and DNA sequencing and
screening\cite{DNA1,DNA2}.

Most work on group testing assumes error-free test outcomes (see Sec.~\ref{sec:RelatedWork} for a more detailed
discussion on existing work). The problem studied in this paper includes,
as a special case, adaptive group testing under general and unknown noise
models for the test outcomes. Specifically, the outcome of a group test is
no longer a deterministic binary value, but rather a Bernoulli random variable
with an unknown parameter that represents the false alarm probability (when the
tested group contains no defective items) or the detection power (i.e., the probability of correct
detection when the test group contains defective items). A data stream
thus corresponds to the noisy Bernoulli test outcomes of a given subset
of the population. The hierarchy of the data streams follows from the ``subset'' relationship among the corresponding test groups. Under the practical assumption that false alarm
and miss detection probabilities are smaller than $1/2$, targets are
those leaf nodes whose mean value exceeds $1/2$. A more detailed mapping of the noisy group testing problem
to the active inference problem studied in this paper is given in Sec.~\ref{sec:AL}.

{\bf{Adaptive sampling with noisy response:}}~The problem also applies to adaptive sampling with noisy
response for estimating a step function in $[0,1]$. Such problems arise in active learning of binary threshold classifiers for document classification~\cite{DocClass} and stochastic root finding~\cite{Frazier}.

%

Partitioning
the $[0,1]$ interval into small intervals and sampling the boundary points of each interval, we can map the adaptive sampling problem to the target search problem
where the target is the small interval containing
the location of the step. Examining larger intervals
(consisting of several smaller intervals) induces a hierarchical structure of the noisy responses. See Sec.~\ref{sec:AL} for a detailed discussion.

\subsection{Main Results}

We develop an active inference strategy for detecting an unknown number of targets among a large number $N$ of data streams with unknown distributions. The performance measure is the number of samples (i.e., detection delay) required for achieving a confidence level above $1-\epsilon$ (i.e., the probability that the declared target set does not equal to the true set is bounded by $\epsilon$). By fully exploiting the tree-structured hierarchy, the proposed active inference strategy has a
sample complexity that is order optimal in both the size $N$ of the search
space and the reliability constraint $\epsilon$.

Referred to as Confidence
Bounds based Random Walk (CBRW), the proposed strategy consists of a global random walk on the tree interwoven with a local confidence-bound based test. Specifically, it induces a biased random walk that initiates at the root of the tree and
eventually arrives and terminates at a target with the
required reliability. Each move in the random walk is guided by the output
of a local confidence-bound based sequential test carried on each child of the node currently being visited by the random walk.
This local sequential test module ensures that
the global random walk is more likely to
move toward the target than move away from it and that the random walk terminates at a true target
with a sufficiently high probability.
%
%

The sample complexity of CBRW is analyzed using properties of biased random walk on a tree and large
deviation results on the concentration of the sample mean statistic.
We show that the sample complexity of CBRW is in the order of $O(\log N +\log \frac{1}{\epsilon})$ provided that the gap between the mean value of each data stream and the given threshold is bounded away from $0$.
It is thus order optimal in both $N$ and $\epsilon$ as determined by information-theoretic lower bounds.
Of particular significance is that the effect on the sample complexity from an enlarged search
space (increasing $N$) and an enhanced reliability (decreasing $\epsilon$) is additive
rather than multiplicative. This results from the random walk structure which effectively
separates two objectives of moving to the targets with $O(\log N)$ samples and declaring the targets at the desired confidence level with $O(\log \frac{1}{\epsilon})$ samples.
The proposed strategy applies to unknown heavy-tailed distribution models and preserves its order optimality in both $N$ and $\epsilon$.
Comprising of calculating confidence bounds of the mean
and performing simple comparisons, the proposed strategy is computationally efficient.

\subsection{Related Work}\label{sec:RelatedWork}

The problem studied here is related to several active learning and sequential inference problems. We discuss here
representative studies most pertinent to this paper and emphasize the differences in our approach
from these existing studies.

{\bf{Noisy group testing:}}~Variations of the classic group testing have been extensively studied in the literature focusing mainly on the noisless case. There are several recent studies that consider one-sided error
(false positive or false negative) in the test outcomes~\cite{G3,G4,G5} or symmetric error (with
equal false positive and false negative probabilities)~\cite{G6,G7,G8} with \emph{known} error probabilities. To our best knowledge, the result in this paper is the first applicable to noisy group testing under general and \emph{unknown} noise models.

{\bf{Adaptive sampling with noisy response:}}~The main body of work on adaptive sampling is based on a Bayesian approach with binary noise of a known model.
A popular Bayesian strategyis the Probabilistic Bisection Algorithm (PBA),
which updates the posterior distribution of the step location after each sample (based on the known model of the noisy response) and chooses the next sampling point to be the median point of the posterior distribution. Although several variations of the
method have been extensively studied in the literature~\cite{waeber,BenOr,Castro} following the pioneering
work of~\cite{BZ}, there is little known about the theoretical guarantees,
especially when it comes to unknown noise models. In this paper we present a
non-Bayesian approach to the adaptive sampling problem under general unknown noise models.

{\bf{HH and HHH detection:}}~Prior solutions for online detection of
HHHes typically involve adjusting which prefixes to monitor either at the arrival of each packet~\cite{HHH3,HHH4,HHH5}, or at periodic intervals~\cite{HHH6,HHH7}. A particularly relevant work is the adaptive monitoring algorithm proposed by Jose \etal~\cite{HHH6}, where a fixed number of measurement rules are adjusted at periodic intervals based on the aggregate packet counts matching to each of these rules. At each time interval, the aggregate count is compared to a heuristically chosen threshold (e.g., a fraction of link capacity), to determine whether it is an HHH, and whether the rules need to be kept in the next interval, or expanded
to monitor the children of the prefix, or collapsed and combined with upstream nodes. While the proposed CBRW has a similar flavor of moving among parent and children, which is very much inherent to the HHH detection problem,  the decision criteria used to adjust the prefix is different. Instead of comparing with a fixed threshold, our decision is based on statistical metric determined by the desired detection error. Different from the heuristic studies in the literature, the proposed strategy offers performance grantee and order optimality. We provide a rigorous framework that succinctly captures the tradeoff between detection time and overall detection performance.

{\bf{Pure-exploration bandit problems:}}
The problem studied here is also related to the so-called pure-exploration bandit problems~\cite{ChenLin} where the objective is to search for a subset of bandit arms with certain properties.
In particular, in the Thresholding Bandit problem introduced in~\cite{Locatelli}, the objective is to determine the arms with mean above a given
threshold. The key difference is that the thresholding bandit problem does not assume any structure in the arms and has a linearly growing sample complexity in the number of arms. The focus of this paper is on exploiting the hierarchical structure inherent to many applications to obtain sublinear sample complexity with the size of the search space.

{\bf{Active hypothesis testing for anomaly detection:}}~The active inference problem considered here falls into the
general class of sequential design of experiments pioneered by
Chernoff in 1959 \cite{Act1} with variations and extensions studied in~\cite{Act2,Act3,Act4,Act5}. These studies assume known or parametric models and focus on randomized test strategies. This paper, however, adopts a nonparametric setting and proposes a deterministic strategy. A number of studies on anomaly detection within the sequential and active hypothesis testing framework exist in the literature (see an excellent survey in~\cite{Act6} and a few recent results in~\cite{Act7,Act8,Act9,Act10}). These studies in general assume known models and do not address hierarchical structure of the search space. A particularly relevant work is~\cite{ChaoCohen}, in which a test strategy based on a random walk on a binary tree was developed. While the CBRW policy proposed here shares a similar structure as the test strategy developed in~\cite{ChaoCohen},
the latter work assumes that the stochastic models of all data streams are known and the testing
strategy relies on using the likelihood ratios calculated based on the known distributions. For the
unknown model scenario considered in this paper, the confidence-bound based statistics used for guiding
the biased random walk are fundamentally different from the likelihood ratio. As a result, the
performance analysis also differs.

\section{Problem Formulation and Preliminaries}\label{Sec:ProbFor}

\subsection{Problem Formulation}
Consider a set of $N$ data streams
conforming to a binary-tree structure with $K$ leaf nodes as illustrated in Fig.~\ref{Fig:BT} (extensions to general tree structures are discussed in Sec.~\ref{sec:AL}). Let $(l,k)$ ($l=0,1,\ldots, L, k=1,\ldots, 2^{L-l}$) denote the $k$th node at level $l$ of the tree.
Let $\{X_{k,l}(t)\}_{t=1}^\infty$ denote the corresponding random
process which is independent and identically distributed with an unknown distribution $f_{k,l}$ and an unknown
mean $\mu_{k,l}$.

Associated with each level $l$ of the tree is a given threshold $\eta_l$ that defines the targets of interest. Specifically, a leaf-level target is a node at level $l=0$ whose mean $\mu_{k,0}$ exceeds $\eta_0$.
Hierarchical targets are defined recursively in terms of $l$. Specifically, a hierarchical target at level $l>0$ is a node whose mean value remains above the threshold $\eta_l$ after excluding all its target decedents.
The tree-structured hierarchy encodes the following relationship among nodes in terms of their mean values: for an arbitrary node $(k,l)$, if $\mu_{k,l}>\eta_l$, then $\mu_{k',l'}>\eta_{l'}$ for all $(k',l')$ on the shortest path from $(k,l)$ to the root node of the tree.

An active inference strategy $\pi=(\{a_t\}_{t\ge 1}, T_\pi,\mathcal{S}_\pi)$ consists of a sampling strategy $\{a_t\}_{t\ge 1}$, a stopping rule $\tau_\pi$, and a terminal decision rule $\mathcal{S}_\pi$.
The sampling strategy $\{a_t\}_{t\ge 1}$ is a sequence of functions mapping from past actions and observations to a node of the tree to be sampled at the current time $t$.
The stopping rule $T_\pi$ determines when to terminate the search, and the decision rule $\mathcal{S}_\pi$ declares the detected set of targets at the time of stopping.
Let $\mathbb{E}_{\mathcal{F}}$ and $\mathbb{P}_{\mathcal{F}}$ denote, respectively,
the expectation and the probability measure under distribution
model $\mathcal{F}=\{f_{(k,l)}\}_{\begin{subarray}{1}l=0,1,\ldots, L\\k=1,\ldots, 2^{L-l} \end{subarray}}$.
The objective is as follows:
\begin{eqnarray}\nn
&\hspace{-1.5em}\textrm{minimize}_\pi ~\mathbb{E}_{\mathcal{F}}T_\pi,\\\nn
&\textit{s.t.}~~ \mathbb{P}_{\mathcal{F}}[\mathcal{S}_\pi\neq \mathcal{S}]\le \epsilon,
\end{eqnarray}
where $\mathcal{S}$ is the true set of targets.

\subsection{Sub-Gaussian, Heavy-Tailed, and Concentration Inequality}

We consider a general distribution $f_{k,l}$ for each process. Due to different concentration behaviors, sub-Gaussian and heavy-tailed distributions are treated separately.
Recall that a real-valued random variable $X$ is called sub-Gaussian~\cite{SubG} if, for all $\lambda\in (-\infty,\infty)$,
\begin{eqnarray}\label{zeta}
\mathbb{E}[e^{\lambda(X-\mathbb{E}[X])}]\le e^{\xi \lambda^2/2}
\end{eqnarray}
for some constant $\xi>0$. We assume (an upper bound on) $\xi$ is known.
For sub-Gaussian random variables, Chernoff-Hoeffding concentration
inequalities hold. Specifically~\cite{Chernoff1}:

\begin{eqnarray}\label{Hoeff}
\left\{
\begin{array}{ll}
\mathbb{P}\left[\overline{X}(s)+\sqrt{\frac{2\xi\log\frac{1}{p}}{s}}<\mu\right] \le p\\
\\
\mathbb{P}\left[\overline{X}(s)-\sqrt{\frac{2\xi\log\frac{1}{p}}{s}}>\mu\right] \le p,
\end{array}\right.
\end{eqnarray}
\break
where $\overline{X}(s)=\frac{1}{s}\sum_{t=1}^s X(t)$ is the sample mean of $s$ independent samples of $X$.

For heavy-tailed distributions, the moment generating functions are no longer bounded, and the Chernoff-Hoeffding type of concentration inequalities do not hold.
However, the following can be said about a truncated sample mean statistic in place of the sample mean in~\eqref{Hoeff}. Assume that a $b$-th ($1<b<2$) moment of $X$ is bounded:
\begin{eqnarray}\label{HTBM}
\mathbb{E}[X^b]\le u,
\end{eqnarray}
for some $u>0$. Define the following truncated sample mean with a parameter $p \in (0,\frac{1}{2}]$
\begin{eqnarray}\label{Trunc}
\widehat{X}(s,p)=\frac{1}{s}\sum_{t=1}^{s}X(t)\mathbbm{1}\left\{ |X(t)|\leq(\frac{ut}{\log\frac{1}{p}})^{1/{b}} \right\}.
\end{eqnarray}
We then have (Lemma~1 in~\cite{Bubeck}),
{\begin{eqnarray} \label{HTCon}
\left\{
\begin{array}{ll}
&\Pr\left[\widehat{X}(s,p)-4u^{1/{b}}(\frac{\log\frac{1}{p}}{s})^{\frac{{b}-1}{{b}}}>\mu\right]\leq p\\
\\
&\Pr\left[\widehat{X}(s,p)+4u^{1/{b}}(\frac{\log\frac{1}{p}}{s})^{\frac{{b}-1}{{b}}}<\mu\right]\leq p.
\end{array}\right.
\end{eqnarray}}

\section{An Active Inference Strategy: CBRW}\label{Sec:HTB}

In this section, we present the Confidence Bounds based Random Walk (CBRW) policy.
We focus on the case of a single target and sub-Gaussian distributions. Extensions to multiple target detection and
heavy-tailed distributions are discussed in Sec.~\ref{sec:AL}.

\subsection{Detecting Leaf-Level Targets}
We first consider applications where it is known that the target is at the leaf level. This includes, for example, HH detection, noisy group testing, and adaptive sampling as discussed in Sec.~\ref{appl}.

The basic structure of CBRW consists of a global random-walk module interwoven with a local
CB-based sequential test module at each step of the random walk.
Specifically, the CBRW policy performs a biased random walk on the
tree that eventually arrives and terminates at the target with the
required reliability. Each move in the random walk (i.e., which
neighboring node to visit next) is guided by the output of the local
CB-based sequential test module. This module ensures that the random walk is more likely to move toward
the target than to move away from the target and that the random walk terminates at the true target with high probability.

Consider first the local CB-based sequential test module. This local sequential test is carried out on a specific node (random process) $\{X(t)\}_{t=1}^\infty$, where we have omitted the node index $(k,l)$ for simplicity. The goal is to determine whether the mean value of $\{X(t)\}_{t=1}^\infty$ is below a given
threshold $\eta$ at a confidence level of $1-\beta$ or above the threshold at a confidence level of $1-\alpha$.
If the former is true, the test module outputs $0$, indicating this node is unlikely to be an ancestor of a target or the target itself. If the latter is true, the output is $1$.
Let $\mathcal{L}(\alpha,\beta,\eta)$ denote this local sequential test with given parameters $\{\alpha,\beta, \eta\}$. It sequentially collects samples from $\{X(t)\}_{t=1}^\infty$. After collecting each sample, it determines whether to terminate the test and if yes, which value to output based on the following rule:
%
%
%
\begin{itemize}
\item If $\overline{X}(s)-\sqrt{\frac{2\xi\log\frac{2 s^3}{\alpha}}{s}} >\eta$, terminate and output $1$.
\item If $\overline{X}(s) +\sqrt{\frac{2\xi\log\frac{2 s^3}{\beta}}{s}}<\eta$, terminate and output $0$.
\item Otherwise, continue taking samples,
\end{itemize}
where $\overline{X}(s)$ denotes the sample mean obtained form $s$ observations and $\xi$ is the distribution
parameter specified in~\eqref{zeta}.
\begin{figure}[htbp]
\noindent\fbox{
\parbox{6in}{
\begin{algorithmic}[1]
\small
      \State Initialization: Initial location of the random walk $(k,l)=(1,L)$, $p_0\in(0,1-\frac{1}{\sqrt2})$, $\alpha=\beta=p_0$, $\epsilon\in(0,0.5)$, $STOP=0$.
      \vspace{1em}
\Loop ~while $STOP$=0
\If{$l>1$}
        \State Test the left child of $(k,l)$ by $\mathcal{L}(\alpha,\beta,\eta_{l-1})$
            \If{the output of the test on the left child is $1$}
                 \State Move to the left child of $(k,l)$.
            \ElsIf{the output of the test on the left child is $0$}
                \State Test the right child of $(k,l)$ by $\mathcal{L}(\alpha,\beta,\eta_{l-1})$
                    \If{the output of the test on the right child is $1$}
                         \State Move to the right child of $(k,l)$.
                     \ElsIf{the output of the test on the right child is $0$}
                         \State Move to the parent of $(k,l)$.
                    \EndIf
            \EndIf
\ElsIf{$l=1$}
    \State Test the left child of $(k,l)$ by $\mathcal{L}(\frac{\epsilon}{2L C_{p_0}},\beta,\eta_{l-1})$.
         \If{the output of the test on the left child is $1$}
            \State Declare the left child of $(k,l)$as the target.
            \State Set STOP=1.
        \ElsIf{the output of the test on the left child is $0$}
            \State Test the right child of $(k,l)$ by $\mathcal{L}(\frac{\epsilon}{2L C_{p_0}},\beta,\eta_{l-1})$.
                    \If{the output of the test on the right child is $1$}
                        \State Declare the right child of $(k,l)$as the target.
                        \State Set STOP=1.
                    \ElsIf{the output of the test on the right child is $0$}
                          \State Move to the parent of $(k,l)$.
                    \EndIf
        \EndIf
 \EndIf
\EndLoop

\end{algorithmic}
}}
\caption{The random walk module of CBRW for detecting a single leaf-level target.}\label{Fig:RWCB}
\end{figure}
We now specify the random walk on the tree based on the outputs of the local CB-based tests.
Let $(k,l)$ denote the
current location of the random walk (which is initially set at the root node).
Consider first $l>1$. The left
child of $(k,l)$ is first probed by the local module $\mathcal{L}(\alpha,\beta,\eta)$ with parameters set to $\alpha=\beta=p_0$
where $p_0$ can be set to any constant in $(0,1-\frac{1}{\sqrt 2})$, and $\eta$ being the threshold associated with level $l-1$ of the children of $(k,l)$.
If the output is $1$, the random walk moves to the left child of $(k,l)$, and the procedure repeats.  Otherwise, the right
child of $(k,l)$ is tested
with the same set of parameters, and the random walk moves to the right child if this test outputs $1$.
If the outputs of the tests on both children are $0$, the
random walk moves back to the parent of $(k,l)$ (the parent of the
root node is defined as itself). The values for $\{\alpha,\beta,\eta\}$ specified above ensure that the random walk moves toward the target with a probability greater than $1/2$.
When the random walk arrives at a node on level $l=1$,
the left
child of $(k,l)$ is first probed by the local module $\mathcal{L}(\alpha,\beta,\eta)$ with parameters set to $\alpha=\frac{\epsilon}{2L C_{p_0}}$, $\beta=p_0$ and $\eta=\eta_0$ where
\begin{eqnarray}\label{CP0}
C_{p_0}=\frac{1}{\bigg(1-\exp(-2(1-2(1-p_0)^2)^2)\bigg)^2}.
\end{eqnarray}
If the output is $1$, the random walk terminates and the left child of $(k,l)$ is declared as the target.  Otherwise, the right
child of $(k,l)$ is tested
with the same set of parameters, and the random walk terminates with the right child declared as the target if this test outputs $1$.
If the outputs of the tests on both children are $0$,
random walk moves back to the parent of $(k,l)$.
The values for $\{\alpha,\beta,\eta\}$ specified above ensure that the random walk terminates at and declares the true target with the required confidence level of $1-\epsilon$. A description of CBRW for detecting a single leaf-level target is given in Fig.~\ref{Fig:RWCB}.

\begin{figure}[htbp]
\noindent\fbox{
\parbox{6in}{
\begin{algorithmic}[1]
\small
      \State Initialization: Initial location of the random walk $(k,l)=(1,L)$, parameters $STOP=0$, $p_0\in(0,1-\frac{1}{\sqrt[3]2})$, $\alpha=\beta=p_0$, $\epsilon\in(0,0.5))$, $C^H_{p_0}$, $STOP=0$.
\Loop~while $STOP=0$
\If{$l>0$}
            \State Test $(k,l)$ by $\mathcal{L}(\alpha,\beta,\eta_{l})$
                \If{the output of the test on $(k,l)$ is $0$}
                    \State Move the parent of $(k,l)$. Set $\alpha=\beta=p_0$.
                \ElsIf{the output of the test on $(k,l)$ is $1$}
                    \State Test the left child of $(k,l)$ by $\mathcal{L}(\alpha,\beta,\eta_{l-1})$.
                        \If{the output of the test on the left child of $(k,l)$ is $1$}
                            \State Move to the left child of $(k,l)$. Set $\alpha=\beta=p_0$.
                        \ElsIf{the output of the test on the left child of $(k,l)$ is $0$}
                            \State Test the right child of $(k,l)$ by $\mathcal{L}(\alpha,\beta,\eta_{l-1})$.
                                \If{the output of the test on the right child of $(k,l)$ is $1$}
                                \State Move to the right child of $(k,l)$. Set $\alpha=\beta=p_0$.
                                \ElsIf{the output of the test on the right child of $(k,l)$ is $0$}
                                    \If{$\alpha<\frac{\epsilon}{3L C^H_{p_0}}$}
                                        \State Declare $(k,l)$ as the target.
                                        \State Set $STOP=1$.
                                     \Else
                                        \State Divide $\alpha$ and $\beta$ by $2$: $\alpha\leftarrow\frac{\alpha}{2}$ and $\beta\leftarrow\frac{\beta}{2}$.
                                     \EndIf
                                \EndIf

                        \EndIf
                \EndIf

\ElsIf{$l=0$}
            \State Test $(k,l)$ by $\mathcal{L}(\frac{\epsilon}{3L C^H_{p_0}},\beta,\eta_{l})$
                \If{the output of the test on $(k,l)$ is $0$}
                    \State Move to the parent of $(k,l)$.
                \ElsIf{the output of the test on $(k,l)$ is $1$}
                    \State Declare $(k,l)$ as the target.
                    \State Set $STOP=1$.
                \EndIf
\EndIf
\EndLoop

\end{algorithmic}
}}
\caption{The random walk module of CBRW for detecting a single hierarchical target.}\label{Fig:RWCB2}
\end{figure}

\subsection{Detecting Hierarchical Targets}

We now consider the case where the target may reside at higher levels of the tree. The following modified CBRW policy detects a potentially hierarchical target with the required confidence level of $1-\epsilon$.

Let $(k,l)$ denote the
current location of the random walk (which is initially set at the root node).
Consider first $(k,l)$ is a non-leaf node with $l>0$. The node $(k,l)$ is first probed by the local module $\mathcal{L}(\alpha,\beta,\eta)$ with parameters set to $\alpha=\beta=p_0$ where $ p_0\in(0,1-\frac{1}{\sqrt[3]2})$
and $\eta=\eta_l$. If the output is $0$, the random walk moves to the parent of $(k,l)$. If the output
is $1$,
then the left child of $(k,l)$ is tested by the local module $\mathcal{L}(\alpha,\beta,\eta)$ with parameters set to $\alpha=\beta=p_0$ and $\eta=\eta_{l-1}$. If the output is $1$, the random walk moves to the left child. Otherwise, the right
child of $(k,l)$ is tested
with the same set of parameters, and the random walk moves to the right child if this test outputs $1$.
If the outputs of the tests at $(k,l)$ and its children are $1$, $0$, and $0$, respectively,
then $(k,l)$ is likely to be a hierarchical target and the random walk stays at $(k,l)$.
When the random walk stays at the same node $(k,l)$, the same tests are repeated on $(k,l)$ and
its children with an increased confidence level. We increase the confidence level by dividing $\alpha$ and $\beta$ by $2$ iteratively.
%
When the current value of $\alpha$ and $\beta$ becomes smaller than $\frac{\epsilon}{3L C^H_{p_0}}$, the random walk stops and declares $(k,l)$ as the target.
The value of
\begin{eqnarray}\label{eq91}
C^H_{p_0}=\frac{1}{\bigg(1-\exp(-2(1-2(1-p_0)^3)^2)\bigg)^2}
\end{eqnarray}
ensures the
desired confidence level of $1-\epsilon$ at detection of the target. If the random walk moves to a new location the values of $\alpha$ and $\beta$ is reset to $p_0$. When the random walk arrives at a leaf node $(k,l)$ with $l=0$, the leaf node is tested by the local module $\mathcal{L}(\alpha,\beta,\eta)$ with parameters set to $\alpha=\frac{\epsilon}{3L C^H_{p_0}}$, $\beta=p_0$ and $\eta=\eta_0$. If the output is $1$, the random walk stops and declares $(k,l)$ as the target. Otherwise, the random walk moves to the parent of $(k,l)$. A description of CBRW for detecting a single hierarchical target is given in Fig.~\ref{Fig:RWCB2}.

\section{Performance Analysis}\label{Sec:PAnal}
In this section, we analyze  the sample complexity of CBRW.
We again focus
on the case of a single target and sub-Gaussian distributions and leave extensions to more general cases to Sec.~\ref{sec:AL}.

\subsection{The Sample Complexity of the CB-based Sequential Test Module}

To analyze the sample complexity of CBRW, we first analyze the sample complexity of the local CB-based sequential test module $\mathcal{L}(\alpha, \beta, \eta)$ in the lemma below. We then analyze the
behavior of the random walk to establish the number of times that the local sequential test is carried out.


\begin{lemma}\label{SCofP}
Let $\mu$ denote the expected value of an i.i.d. sub-Gaussian random process $\{X(t)\}_{t=1}^\infty$.
Let $\tau_{\mathcal{L}}$ be the stopping time of the
CB-based sequential test $\mathcal{L}(\alpha, \beta, \eta)$ applied to $\{X(t)\}_{t=1}^\infty$.
We have, in the case of $\mu>\eta$,
\begin{eqnarray}\label{ProbL01}
&&\hspace{-3em}\mathbb{P}[\overline{X}({T})+\sqrt{\frac{2\xi\log\frac{2 T^3}{\beta}}{T}}<\eta]\le \beta,\\\label{ProbL1}
&&\hspace{-3em}\mathbb{E}[T]\le\frac{48}{(\mu-\eta)^2}\log \frac{24\sqrt[3]{\frac{2}{\alpha}}}{(\mu-\eta)^2}+2.
\end{eqnarray}
In the case of $\mu<\eta$,

\begin{eqnarray}\label{ProbL02}
&&\hspace{-3em}\mathbb{P}[\overline{X}({T})-\sqrt{\frac{2\xi\log\frac{2 T^3}{\alpha}}{2T}}>\eta]\le \alpha,\\\label{ProbL2}
&&\hspace{-3em}\mathbb{E}[T]\le\frac{48}{(\mu-\eta)^2}\log \frac{24\sqrt[3]{\frac{2}{\beta}}}{(\mu-\eta)^2}+2.
\end{eqnarray}

\end{lemma}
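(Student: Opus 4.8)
\emph{Proof plan.} The four assertions separate into two independent tasks: bounding the probability that the test outputs the wrong value, and bounding its stopping time. I would carry out the case $\mu>\eta$ in full; the case $\mu<\eta$ is obtained verbatim by interchanging $\alpha$ with $\beta$ and the two sides of the test, so it requires no separate argument.

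For the error bound \eqref{ProbL01}: the test can terminate with output $0$ only at a time $s$ at which $\overline{X}(s)+\sqrt{2\xi\log(2s^{3}/\beta)/s}<\eta$, and since $\eta<\mu$ the event $\{\overline{X}(T)+\sqrt{2\xi\log(2T^{3}/\beta)/T}<\eta\}$ is contained in $\bigcup_{s\ge1}\{\overline{X}(s)+\sqrt{2\xi\log(2s^{3}/\beta)/s}<\mu\}$ (and on $\{T=\infty\}$ the test never outputs $0$). Applying the lower-tail inequality in \eqref{Hoeff} at each $s$ with $p=\beta/(2s^{3})$ and a union bound gives total probability at most $\sum_{s\ge1}\beta/(2s^{3})=\tfrac12\zeta(3)\beta<\beta$. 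The bound \eqref{ProbL02} is identical, using the upper-tail inequality in \eqref{Hoeff} with $p=\alpha/(2s^{3})$.

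For the stopping-time bound \eqref{ProbL1}, denote the gap by $\Delta=\mu-\eta>0$. The key elementary observation is: if at time $s$ we have $\overline{X}(s)>\mu-\Delta/2$ and also $\sqrt{2\xi\log(2s^{3}/\alpha)/s}\le\Delta/2$, then $\overline{X}(s)-\sqrt{2\xi\log(2s^{3}/\alpha)/s}>\mu-\Delta=\eta$, so the test has already stopped. Hence, letting $s_{0}$ be the smallest integer with $\sqrt{2\xi\log(2s^{3}/\alpha)/s}\le\Delta/2$ for all $s\ge s_{0}$, we get $\{T>s\}\subseteq\{\overline{X}(s)\le\mu-\Delta/2\}$ for every $s\ge s_{0}$, and the lower-tail inequality in \eqref{Hoeff} yields $\mathbb{P}[T>s]\le\exp(-s\Delta^{2}/(8\xi))$. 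Summing, $\mathbb{E}[T]=\sum_{s\ge0}\mathbb{P}[T>s]\le s_{0}+\sum_{s\ge s_{0}}\exp(-s\Delta^{2}/(8\xi))\le s_{0}+1+8\xi/\Delta^{2}$, the geometric tail contributing only an $O(1/\Delta^{2})$ term. The bound \eqref{ProbL2} is the same computation with $\alpha$ replaced by $\beta$.

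The one genuinely computational step --- and the place where the explicit constants $48$ and $24$ are pinned down --- is bounding $s_{0}$, i.e. inverting the map $s\mapsto 2\xi\log(2s^{3}/\alpha)/s$. I would use the standard elementary estimate that $(\log s)/s\le c$ holds once $s\ge(2/c)\log(1/c)$, together with $\log(2s^{3}/\alpha)=3\log s+3\log\sqrt[3]{2/\alpha}$, to show that any $s$ of the form $(C_{1}/\Delta^{2})\log(C_{2}\sqrt[3]{2/\alpha}/\Delta^{2})$ satisfies $\sqrt{2\xi\log(2s^{3}/\alpha)/s}\le\Delta/2$; substituting this candidate and checking the inequality by monotonicity gives the stated coefficient once the known sub-Gaussian parameter $\xi$ is absorbed into the constants. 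Combining the resulting bound on $s_{0}$ with the $O(1/\Delta^{2})$ geometric-tail term and the integer-ceiling loss, and using $\log(24\sqrt[3]{2/\alpha}/\Delta^{2})\ge1$ to fold all residual $O(1/\Delta^{2})$ additive terms into the leading term, produces \eqref{ProbL1}. I expect this inversion estimate, and the bookkeeping of constants it forces, to be the main (and essentially only) obstacle.
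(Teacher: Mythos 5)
Your proposal is correct and follows essentially the same route as the paper's Appendix~A proof: a union bound over $s$ with per-time failure probability $\beta/(2s^3)$ (resp.\ $\alpha/(2s^3)$) for the confidence claims, and for the expected stopping time the definition of $s_0$ as the first time the confidence radius drops below $(\mu-\eta)/2$, a Hoeffding tail bound for $s\ge s_0$, and the inversion of $s\mapsto \log(2s^3/\alpha)/s$ via $\log\log x\le\tfrac12\log x$ to pin down the constants $48$ and $24$. The only cosmetic difference is that you bound $\mathbb{P}[T>s]$ by a single-time deviation event (geometric tail) where the paper sums $\alpha/(2s^3)$ over all future times to get a tail summing to at most $1$; both are valid, and your remark about absorbing $\xi$ into the constants is in fact more careful than the paper, which silently drops $\xi$ in its definition of $s_0$.
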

\begin{proof}
See Appendix~A.
\end{proof}

The inequalities \eqref{ProbL01} and \eqref{ProbL02} establish the confidence levels for the local sequential CB-based test.
Both results on the confidence levels and the sample complexity are based on the concentration inequalities given in~\eqref{Hoeff}.

\subsection{The Sample Complexity of CBRW}

In both cases of leaf-level target detection and hierarchical target detection, the sample complexity of CBRW is order optimal in both $N$ and $\frac{1}{\epsilon}$ as stated in Theorem~\ref{SCofRWCB} below.
\begin{theorem}\label{SCofRWCB}
Assume that there exists $\delta>0$ such that
$\mu_{k,l}-\eta_l\ge \delta$ for all $(k,l)$ ($l=0,1,\ldots, L, k=1,\ldots, 2^{L-l}$). We have\footnote{The search
space for the case of detecting leaf-level target is of size $K$, the number of leaf nodes. However, since $N=2K-1$ is of the same order of $K$,~\eqref{logord} satisfies for both $K$ and $N$.}
\begin{eqnarray}\label{logord}
\mathbb{E}_\mathcal{F}[T_{CBRW}]= O(\log_2 N +\log\frac{1}{\epsilon})
\end{eqnarray}
and
\begin{eqnarray}
\mathbb{P}_\mathcal{F}[\mathcal{S}_{CBRW}\neq \mathcal{S}]\le \epsilon.
\end{eqnarray}

\end{theorem}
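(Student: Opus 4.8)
The plan is to decompose the analysis along the two interwoven modules: the biased random walk on the tree and the local CB-based sequential test $\mathcal{L}(\alpha,\beta,\eta)$. I would treat the leaf-level case first (the hierarchical case being analogous with the extra layer of $\alpha,\beta$-halving iterations). The starting observation is that, by the construction of the test thresholds and the choice $\alpha=\beta=p_0$ with $p_0\in(0,1-1/\sqrt 2)$, at each interior step the random walk moves one level \emph{toward} the target with probability at least $q>1/2$. This follows from the concentration bounds \eqref{Hoeff}: conditioned on being at a node whose relevant child lies on the path to the target, the test on that child outputs $1$ unless one of the finitely many confidence bounds fails; a union bound over the relevant failure events, together with $\mu_{k,l}-\eta_l\ge\delta$, bounds the ``wrong direction'' probability by something strictly below $1/2$ once $p_0$ is small enough — this is exactly what the constant $C_{p_0}$ in \eqref{CP0} is calibrated to capture.

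Given this drift, the number of random-walk \emph{moves} before absorption at level $1$ is stochastically dominated by the hitting time of a biased random walk on $\{0,1,\ldots,L\}$ started at $L$ with reflection at $L$; standard results give $\mathbb{E}[\#\text{moves}] = O(L) = O(\log_2 N)$. Each move, however, costs a random number of samples equal to (at most twice) the stopping time $\tau_{\mathcal{L}}$ of a test with parameters $\alpha=\beta=p_0$; by \eqref{ProbL1}–\eqref{ProbL2} and $\mu_{k,l}-\eta_l\ge\delta$, $\mathbb{E}[\tau_{\mathcal{L}}]\le \frac{48}{\delta^2}\log\frac{24\sqrt[3]{2/p_0}}{\delta^2}+2$, a constant independent of $N$ and $\epsilon$. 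Since the per-move costs are i.i.d.-dominated and independent of the walk's geometry, Wald's identity (or a direct conditioning argument) yields that the total sample cost accrued \emph{before} the final level-$1$ test is $O(\log_2 N)$ in expectation. The terminal test at level $1$ is run with $\alpha=\frac{\epsilon}{2LC_{p_0}}$; plugging this into \eqref{ProbL1} shows its expected cost is $O(\log\frac{1}{\epsilon}+\log\log N)=O(\log\frac 1\epsilon + \log_2 N)$. Summing the two contributions gives \eqref{logord}.

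For the reliability claim $\mathbb{P}_{\mathcal F}[\mathcal S_{CBRW}\neq\mathcal S]\le\epsilon$, I would argue that an error occurs only if the walk ever declares a non-target leaf as the target, which requires a level-$1$ test on a wrong child to output $1$. Each such test has false-alarm probability at most $\alpha=\frac{\epsilon}{2LC_{p_0}}$ by \eqref{ProbL02}. The remaining work is to bound the expected \emph{number} of level-$1$ terminal tests the walk performs along non-target branches: since the walk has downward drift bounded away from $1/2$, the expected number of visits to level-$1$ nodes off the target path is $O(LC_{p_0})$ — precisely where the factor $2LC_{p_0}$ in the definition of $\alpha$ comes from — so a union bound over these visits gives total error $\le\epsilon$. (One also checks that with the complementary probability the walk is \emph{not} absorbed prematurely and does eventually reach and correctly declare the true target, using the one-sided guarantee \eqref{ProbL01} on the target branch.)

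The main obstacle is the coupling between the walk's trajectory and the random sample cost of each step: the number of moves and the cost of each move are not independent in the obvious way, because the test parameters (and hence cost distributions) at level $1$ differ from interior levels, and the walk may revisit level-$1$ nodes multiple times. I would handle this by a clean stopping-time decomposition — separate the ``transient phase'' (all steps at levels $\ge 2$, plus level-$1$ tests that output $0$) from the single successful terminal test — and bound each phase's cost using Wald's identity applied to the appropriate filtration, with the geometric tail of the biased-walk hitting time controlling the number of terms. Getting the constants in the union bound for the reliability part to match the prescribed $\alpha=\frac{\epsilon}{2LC_{p_0}}$ (rather than a looser bound) is the other delicate point, and it hinges on the sharp estimate that the expected number of off-path level-$1$ visits is at most $C_{p_0}$ per tree level, which is a direct computation with the biased-walk return probabilities.
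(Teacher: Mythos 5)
Your proposal is correct and follows essentially the same route as the paper's proof: you derive the strictly-greater-than-$1/2$ drift toward the target from the local test's error guarantees, bound the expected number of test calls by $O(LC_{p_0})$, combine this with the constant per-call cost from Lemma~\ref{SCofP} (and the single $O(\log\frac{1}{\epsilon})$ terminal call), and close the reliability argument by the same union bound that motivates the choice $\alpha=\frac{\epsilon}{2LC_{p_0}}$. The only cosmetic difference is that you count visits via stochastic domination by a one-dimensional biased hitting time, whereas the paper decomposes the trajectory into last-passage times from the nested subtrees $\mathcal{T}_l$, each bounded by $C_{p_0}$; both yield the same $O(\log_2 N+\log\frac{1}{\epsilon})$ order.
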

\begin{proof}
See Appendix B.
\end{proof}

The gap $\mu_{k,l}-\eta_l$ in the mean value of a random process at level $l$ and the threshold at the respective level indicates the informativeness of the observations.
It is a practical assumption that the gap is bounded away from $0$.
We might also naturally assume that the higher levels are less informative; thus, have smaller gaps. For example, in group testing, tests from larger groups of items
are less informative about the presence of defective items. Under this assumption we can provide a finite-time upper bound on the sample complexity of CBRW.

We first introduce some auxiliary notions which are useful in understanding the trajectory of the random walk in CBRW. Under leaf-level target setting, consider a sequence of
subtrees $\{\mathcal{T}_1,\mathcal{T}_2,...,\mathcal{T}_L\}$ of $\mathcal{T}$. Subtree
$\mathcal{T}_L$ is
obtained by removing the biggest half-tree containing the target from $\mathcal{T}$.
Subtree $\mathcal{T}_l$ is iteratively obtained by removing the biggest half-tree containing the target from the half-tree containing the target in the previous iteration.
In the example given in Fig.~\ref{Fig:BT}, $\mathcal{T}_3=\{(1,3),(2,2),(3,1),(4,1),(5,0),(6,0),(7,0),(8,0)\}$, $\mathcal{T}_2=\{(1,2),(2,1),(3,0),(4,0)\}$ and $\mathcal{T}_1=\{(1,1),(2,0)\}$.
Let $(r_l,l)$ denote the child of the root node of $\mathcal{T}_l$.
Under hierarchical target setting with a hierarchical target at level $l_0$, let subtrees $\{\mathcal{T}_{l_0+1},..., \mathcal{T}_L\}$ be the same as defined for a target that is a
decedent of the hierarchical target. Also, define $\mathcal{T}'_1$ and $\mathcal{T}'_2$ as subtrees whose root nodes are
children of the hierarchical target. Let $(r'_l,l)$ denote the root node of $\mathcal{T}'_l$.

A detailed finite-time upper bound on the sample complexity of CBRW is given in~\eqref{eq9} and~\eqref{eq91} where $(r_0,0)$ and $(r_{l_0},l_0)$ denote the targets under leaf-level and hierarchical settings, respectively.

\begin{figure*}
\begin{center}
\noindent\fbox{
\parbox{6.5in}
{\small{
\begin{eqnarray}\label{eq9}
\mathbb{E}_\mathcal{F}[T_{CBRW}]
&\le& 2\sum_{l=1}^{L} C_{p_0} \bigg(\frac{48}{(\mu_{r_l,l}-\eta_l)^2}\log \frac{24\sqrt[3]{\frac{2}{p_0}}}{(\mu_{r_l,l}-\eta_l)^2}+2\bigg)
+ \frac{48}{(\mu_{r_0,0}-\eta_0)^2}\log \frac{24\sqrt[3]{\frac{2C_{p_0}L}{\epsilon}}}{(\mu_{r_0,0}-\eta_0)^2}+2.~~~~~\\\nn
\mathbb{E}_\mathcal{F}[T_{CBRW}]
&\le& 3\sum_{l=1_0+1}^{L} \frac{C^H_{p_0}}{1-p_0} \bigg[\bigg(\frac{48}{(\mu_{r_l,l}-\eta_l)^2}\log \frac{24\sqrt[3]{\frac{2}{p_0}}}{(\mu_{r_l,l}-\eta_l)^2}+2\bigg)
+\frac{p_0}{(1- p_0)}\frac{16\log 2}{(\mu_{r_l,l}-\eta_l)^2}\bigg]\\\nn
&&~~~+3\sum_{l'=1}^{2} \frac{C^H_{p_0}}{1-p_0} \bigg[\bigg(\frac{48}{(\mu_{r_l',l'}-\eta_{l'})^2}\log \frac{24\sqrt[3]{\frac{2}{p_0}}}{(\mu_{r_l',l'}-\eta_{l'})^2}+2\bigg)
+\frac{p_0}{(1- p_0)}\frac{16\log 2}{(\mu_{r_{l'},l'}-\eta_{l'})^2}\bigg]\\\label{eq91}
&&~~~+ {\log_2\frac{6LC^H_{p_0}p_0}{\epsilon}}\bigg(\frac{48}{(\mu_{r_{l_0},l_0}-\eta_{l_0})^2}\log \frac{24\sqrt[3]{\frac{4}{\epsilon}}}{(\mu_{r_{l_0},l_0}-\eta_{l_0})^2}+2\bigg).
\end{eqnarray}}

}}
\end{center}\caption{Finite-regime upper bounds on the performance of CBRW under leaf-level~\eqref{eq9} and hierarchical~\eqref{eq91} target settings.}
\end{figure*}


\section{Extensions and Discussions}\label{sec:AL}

In this section, we discuss extensions to more general scenarios and the mapping of various applications to the target detection problem at hand.

\subsection{Detecting an Unknown Number of Targets}

Detecting $|\mathcal{S}|>1$ targets with $|\mathcal{S}|$ known can be easily implemented by
sequentially locating the targets one by one. We assume that each target
can be removed after it is located by CBRW\footnote{For example, in group testing, the detected defective item is no longer tested in any subsequent group tests or in HHH detection, the packet count of each detected HHH can be subtracted from the packet count of the parents.}. To ensure that the reliability constraint holds, we replace $\epsilon$ with $\frac{\epsilon}{|\mathcal{S}|}$ in each search of a single target. The reliability constraint holds by union bound on the error probabilities of the searches for a single target.

When the number of targets is unknown, but an upper bound $S_{\textrm{max}}\ge |\mathcal{S}|$ on the number of targets is known, we can similarly detect the targets one by one.
To ensure that the reliability constraint holds, we replace $\epsilon$ with $\frac{\epsilon}{2S_{\textrm{max}}}$ in each search of a single target.
The stopping rule for the overall search can be implemented by testing the root node. Specifically, the root node is tested by $\mathcal{L}(\epsilon_0,\epsilon_0,\eta_L)$ with $\epsilon_0=\frac{\epsilon}{2S_{\textrm{max}}}$ every $LC_{p_0}$ steps in the random walk under leaf target setting and every $LC^H_{p_0}$ steps under the hierarchical target setting. The reliability constraint holds by union bound on the error probabilities of the searches for a single target and error in stopping the overall search before finding all targets.

Under both leaf-level and hierarchical
target settings, with this modification, the sample complexity of finding single targets simply add up to ab $O(|\mathcal{S}|\log K + |\mathcal{S}|\log \frac{1}{\epsilon})$ overall sample complexity.

\subsection{Heavy-Tailed Distributions}

The extension to more general distribution models can be implemented
by only modifying the local CB-based test $\mathcal{L}$ in a way that the confidence levels remain the same. As a result, the
behavior of the random walk on the tree remains the same.

Specifically, for heavy-tailed distributions with existing $b$'th moment as given in~\eqref{HTBM}, we modify the test $\mathcal{L}$
\begin{itemize}
\item If $\widehat{X}(s,\alpha)-4u^{1/{b}}(\frac{\log\frac{2s^3}{\alpha}}{s})^{\frac{{b}-1}{{b}}}>\eta$, terminate and output $1$.
\item If $\widehat{X}(s,\beta)+4u^{1/{b}}(\frac{\log\frac{2s^3}{\beta}}{s})^{\frac{{b}-1}{{b}}}<\eta$, terminate and output $0$.
\item Otherwise, continue taking samples.
\end{itemize}
The resulting CBRW achieves the same $O(\log N+ \log\frac{1}{\epsilon})$ sample complexity under both leaf-level and hierarchical target settings. The proofs follow similar to the proofs of Theorem~\ref{SCofRWCB} and Lemma~\ref{SCofP}, using confidence bounds~\eqref{HTCon} instead of~\eqref{Hoeff} in the proof of Lemma~\ref{SCofP}.

\subsection{General Tree Structures}

Consider a general tree-structured hierarchy as
shown in Fig.~\ref{Fig:GenT}. The CBRW policy can be modified as follows.

To have the required confidence level in taking the steps toward the target, the input parameters in the
local CB-based sequential test $\mathcal{L}$ are modified based on the degree $d_{k,l}$
of each node $(k,l)$ in the tree. In particular, under the leaf-level target setting, we choose $p_0\in(1-\frac{1}{2^{-(d_{k,l}-1)}})$
and $\alpha=\frac{\epsilon}{(D-1)LC}$ where $L$ is the
maximum distance from the root node to a leaf node, $D$ is the maximum
degree of the nodes in the tree and $C$ is a constant independent of $K$ and $\epsilon$.
Under the hierarchical target setting,
we choose $p_0\in(1-\frac{1}{2^{-d_{k,l}}})$ and when increasing the confidence level iteratively to detect the hierarchical target, we terminate the search when $p$ goes below $\frac{\epsilon}{DLC}$.
The random walk moves
to a child or the parent of the current location according
to the outputs of the tests.

Following similar lines as in
the proof of Theorem~\ref{SCofRWCB}, we can show a
sample complexity of $O(LD)+O(\log\frac{1}{\epsilon})$ under both leaf-level and hierarchical target settings.

\begin{figure}[htbp]
\centering
\begin{psfrags}
\scalefig{0.6}\epsfbox{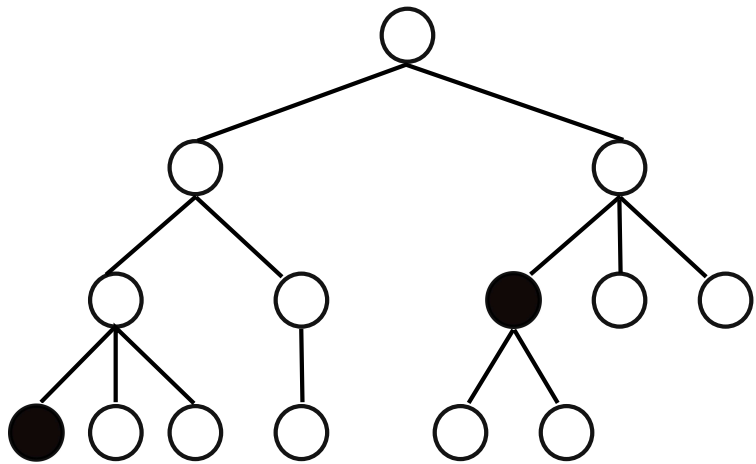}\caption{A general tree-structured hierarchy.} \label{Fig:GenT}
\end{psfrags}
\end{figure}

\subsection{Mapping from Various Applications}

{\bf{HH and HHH detection:}}~The CBRW policy directly applies to leaf-level HH and HHH detection under leaf-level and hierarchical target settings, respectively.
In particular, provided a controllable counter which can be assigned to each IP prefix, we assign the counter to the current location which is
desired to be tested according to $\mathcal{L}$. Based on the packet count, the counter is moved on the tree according to CBRW.
When there are several counters available the tree can be partitioned to smaller subtrees and CBRW is run on each subtree separately to make
an efficient use of the available counters.

{\bf{Noisy group testing:}}~
In group testing, the objective is to identify a few defective items in a population of $K$ items performing tests on subsets of items. Each group test gives a binary outcome, indicating
whether the tested group contains any defective items. We consider the case of adaptive group testing under general and unknown noise models for test outcomes. Specifically,
the outcome of a group test is a Bernoulli random variable with an unknown parameter that represents the false alarm or missed detection probability. 

The CBRW policy under leaf-level target setting directly applies to noisy group testing where
the defective items are the leaf-level targets. Each data stream corresponds to the noisy Bernoulli test outcomes of the given subsets of the population.
The parent-children relationship on the tree represents the subset relationship among the corresponding test groups such that the group corresponding to the
parent is the union of the groups corresponding to the children. Under the practical assumption that false alarm
and miss detection probabilities are smaller than $1/2$, targets are
those data streams generated by a singleton subset (i.e., leaf nodes) with
a mean value exceeding $1/2$. Although group testing problem does not necessarily conform to a predetermined hierarchical structure, the proposed solution offers order optimal number of tests in both
population size and reliability constraint.

{\bf{Adaptive sampling with noisy response:}}~
Consider
the $[0,1]$ interval as the input space. We limit the input space to be one-dimensional in order
to demonstrate the main idea. The hypothesis
class, denoted by $\mathcal{H}$, is the set of all step functions on [0,1] interval.

\begin{eqnarray}~\label{AdSa}
\mathcal{H}=\bigg\{h_z: [0,1]\rightarrow \mathbb{R}, h_z(x)=\mathbbm{1}_{\{(z,1]\}}(x), z\in(0,1)\bigg\}
\end{eqnarray}
Each hypothesis $h_z$ assigns a binary label to each element of the
input space $[0,1]$. There is a true hypothesis $h_{z^*}$ that
determines the ground truth labels for the input space.

The learner is allowed to make sequential observations by
adaptively sampling $h_{z^*}$. The observations are however
noisy.
The goal is to design a sequential sampling strategy aiming
at minimizing the sample complexity
required to obtain a confidence interval of length $\Delta$
for $z^*$ at a $1-\epsilon$ confidence level.
Specifically, the learner chooses the sampling point $x$
at each time $t$ and receives a noisy sample of the true
hypothesis.

We consider two noise models with unknown distribution.
In the first noise model, the learner observes a noisy sample of the threshold function in the form of
\begin{eqnarray}h^N_{z^*}(x;t)=h_{z^*}(x;t)+n(x,t),\end{eqnarray}
where $n(x,t)$ is a zero mean sampling noise that possibly depends on
the sampling point~$x$ and is generated i.i.d. over~$t$.

In the second noise model, the binary samples can flip
from zeros to ones and vice versa. Specifically, the learner receives
erroneous binary samples with an error probability of $p(.)$ in the form of
\begin{eqnarray}h^B_{z^*}(x;t)=h_{z^*}(x;t)\oplus B(x,t),\end{eqnarray}
where $\oplus$ is the boolean sum and $B(x,t)$ is a Bernoulli
random variable with~$\mathbb{P}[B(x,t)=1]=p(x)$ that may depend on the sampling point~$x$ and is generated i.i.d. over~$t$.

We now present a solution to the adaptive sampling problem based
on the results obtained for CBRW strategy.
For the simplicity of
presentation we assume $\Delta=\frac{1}{2^L}$ ($K=\frac{1}{\Delta}$).
Let each node on a binary tree $\mathcal{T}$ represent an
interval $[z^{L}_{k,l}, z^{R}_{k,l}]\subset[0,1]$ with $z^{L}_{k,l}=(k2^{L-l}-1)\Delta$ and $z^{U}_{k,l}=k2^{L-l}\Delta$. The interval corresponding to each node on the tree is the union of the
intervals corresponding to its children.

What remains to be specified is what entails when probing a node/interval $(k,l)$.
When $(k,l)$ is probed the boundary points of the interval are tested by $\mathcal{L}(\alpha,\beta,\eta)$ with parameters set to $\eta_l=0.5$ and
$\alpha=\beta=p_0$ on a non-leaf node, and $\alpha=\frac{\epsilon}{2LC_{p_0}}$, $\beta=p_0$ on a leaf node where $p_0$ can be set to any constant
in $(0,1-\frac{1}{\sqrt[4]2})$. 
%
The output is $1$ (indicating that the interval is likely to contain $z^*$) if and only if the output of
$\mathcal{L}$ is $0$ on the left boundary and $1$ on the right boundary.
The output is $0$ otherwise.
From the results on the analysis of CBRW the above solution has a sample complexity of $O(\frac{1}{c^2}\log K + \frac{1}{c^2}\log \frac{1}{\epsilon})$ where
$c$ is $0.5$ under the first noise model and $c$ is a lower bound on the gap in $0.5-p(.)$ under the second noise model.

%
%
%
%
%
%
%

\section{Conclusion}

In this paper, we studied the problem of detecting a few targets among a
large number of hierarchical data streams modeled as random processes with unknown distributions.
We designed a sequential strategy to interactively choose the sampling point aiming at minimizing the sample
complexity subject to a reliability constraint. The proposed sequential sampling strategy detects the targets at the desired confidence level with an order optimal logarithmic
sample complexity in both problem size and the parameter of reliability constraint. We further showed the
extensions of the results to a number of active inference
and learning problems in networking and data analytics
applications.

The results obtained in this work extend to the detection of anomaly in other statistics such as variance. In particular,
replacing the sample mean with sample variance in the local CB-based sequential test and modifying the second term in
the upper and lower confidence bounds in the local CB-based sequential test,
both the CBRW policy and its sample complexity analysis apply
to anomaly detection
where the anomalies manifest in the variance. Similar results can be obtained for other statistics assuming the existence of an efficient estimator.

\section*{Appendix~A}

\begin{proof}[Proof of Lemma~\ref{SCofP}]

The proof of Lemma~\ref{SCofP} is based on concentration inequalities for Sub-Gaussian distributions.

We prove inequality~\eqref{ProbL1} here. The other case, $\mu<\eta$ ~\eqref{ProbL2}, can be proven similarly.
\begin{eqnarray}\nn
&&\hspace{-4em}\mathbb{P}\bigg[\overline{X}{(T)}+\sqrt{\frac{2\xi\log\frac{2 T^3}{\beta}}{T}}<\eta\bigg]\\\nn
&\le&\mathbb{P}\bigg[\sup_s\overline{X}{(s)}+\sqrt{\frac{2\xi\log\frac{2 s^3}{\beta}}{s}}<\eta\bigg]\\\nn
&\le&\sum_{s=1}^\infty \mathbb{P}\bigg[\overline{X}{(s)}+\sqrt{\frac{2\xi\log\frac{2 s^3}{\beta}}{s}}<\eta\bigg]\\\label{CI3}
&\le&\sum_{s=1}^\infty\exp(-\log\frac{2 s^3}{\beta} )\\\nn
&=&\sum_{s=1}^\infty\frac{p}{2 s^3}\\\nn
&\le&\beta.
\end{eqnarray}
Inequity~\eqref{CI3} is obtained by~\eqref{Hoeff}.

We next analyze the $\mathbb{E}[T]$ for $\mu>\eta$. Let $s_0=\min\{s\in \mathbb{N}: \sqrt{\frac{2\log\frac{2 s^3}{\alpha}}{s}}\le\frac{\mu-\eta}{2},s>1\}$, for $n\ge s_0$:
\begin{eqnarray}\nn
\mathbb{P}[T\ge n]
&\le& \mathbb{P}\bigg[\sup\bigg\{s:\overline{X}{(s)}+\sqrt{\frac{2\xi\log\frac{2 s^3}{\beta}}{s}}>\eta,~\text{and}~\\\nn
&&\hspace{1em}\overline{X}{(s)}-\sqrt{\frac{2\xi\log\frac{2 s^3}{\alpha}}{s}}<\eta\bigg\}\ge n\bigg]\\\nn
&\le& \mathbb{P}\bigg[\sup\bigg\{s:~\overline{X}{(s)}-\sqrt{\frac{2\xi\log\frac{2 s^3}{\alpha}}{s}}<\eta\bigg\}\ge n\bigg]\\\nn
&\le&\sum_{s=n}^\infty\mathbb{P}\bigg[\overline{X}{(s)}-\sqrt{\frac{2\xi\log\frac{2 s^3}{\alpha}}{s}}<\eta\bigg]\\\label{XSS}
&\le&\sum_{s=n}^\infty\mathbb{P}\bigg[\overline{X}_{s}-\mu<-\sqrt{\frac{2\log\frac{2 s^3}{\alpha}}{s}}\bigg]\\\nn
&\le&\sum_{s=n}^\infty\exp(-\log\frac{2 s^3}{\alpha} )\\\nn
&\le&\sum_{s=n}^\infty\frac{\alpha}{2s^3}\\\nn
&\le&\frac{\alpha}{4(n-1)^2}.
\end{eqnarray}
Notice that~\eqref{XSS} holds because $n\ge s_0$. We can write $\mathbb{E}[T]$ in terms of $\mathbb{P}[T\ge n]$ as
\begin{eqnarray}
\mathbb{E}[T]&=&\sum_{n= 0}^\infty\mathbb{P}[T\ge n]\\\nn
&=&s_0+\sum_{n= s_0}^\infty\mathbb{P}[T\ge n]\\\nn
&\le&s_0+\sum_{n= s_0}^\infty\frac{\alpha}{4(n-1)^2}\\\nn
&\le&s_0+1.
\end{eqnarray}
For the last inequality notice that $s_0$ is defined to be bigger than $1$. It remains to find $s_0$. Note that for all $x>0$ we have $\log x <\sqrt x$ so $\log \log x < \log \sqrt x=\frac{1}{2}\log x$. For $s=\frac{48}{(\mu-\eta)^2}\log \frac{24\sqrt[3]{\frac{2}{\alpha}}}{(\mu-\eta)^2} $

\begin{eqnarray}\nn
\log \frac{2s^3}{\alpha}&=& 3\log \sqrt[3]{\frac{2}{\alpha}}s\\\nn
&=&3\log \sqrt[3]{\frac{2}{\alpha}}\frac{48}{(\mu-\eta)^2}\log \frac{24\sqrt[3]{\frac{2}{\alpha}}}{(\mu-\eta)^2} \\\nn
&=&3\log \sqrt[3]{\frac{2}{\alpha}}\frac{24}{(\mu-\eta)^2}+3\log\log \bigg(\frac{24\sqrt[3]{\frac{2}{\alpha}}}{(\mu-\eta)^2}\bigg)^2\\\nn
&\le&3\log \sqrt[3]{\frac{2}{\alpha}}\frac{24}{(\mu-\eta)^2}+3\log \sqrt[3]{\frac{2}{\alpha}}\frac{24}{(\mu-\eta)^2}\\\nn
&=&6\log \sqrt[3]{\frac{2}{\alpha}}\frac{24}{(\mu-\eta)^2}\\\nn
&=&6\frac{(\mu-\eta)^2}{48}s.
\end{eqnarray}
Thus, for $s=\frac{48}{(\mu-\eta)^2}\log \frac{24\sqrt[3]{\frac{2}{\alpha}}}{(\mu-\eta)^2} $,
\begin{eqnarray}\nn
\sqrt{\frac{2\log\frac{2 s^3}{\alpha}}{s}}\le\frac{\mu-\eta}{2}.
\end{eqnarray}
So, we have the following upper bound for $s_0$
\begin{eqnarray}
s_0\le \lceil \frac{48}{(\mu-\eta)^2}\log \frac{24\sqrt[3]{\frac{2}{\alpha}}}{(\mu-\eta)^2} \rceil +1.
\end{eqnarray}
The addition of $1$ is because $s_0$ is defined to be bigger than~$1$.
Thus,

\begin{eqnarray}
\mathbb{E}[T]\le\frac{48}{(\mu-\eta)^2}\log \frac{24\sqrt[3]{\frac{2}{\alpha}}}{(\mu-\eta)^2}+2,
\end{eqnarray}
which completes the proof.

\end{proof}

\section*{Appendix~B}

\begin{proof}[Proof of Theorem~\ref{SCofRWCB}]

An upper bound on the sample complexity of test $\mathcal{L}$ is provided in
Lemma~\ref{SCofP}. Here, we establish an upper bound on the number of times that
test $\mathcal{L}$ is called in CBRW. First, consider the case of leaf-level target setting.
%

In order to analyze the trajectory of the random walk, we consider
the last passage time $T_l$ of the random walk from each subtree $\mathcal{T}_l$.
We prove an upper bound on $\mathbb{E}[T_l]$ for each $l$ which gives an
upper bound on the total number
of times that test $\mathcal{L}$ is called. Notice that the total number
of times that test $\mathcal{L}$ is called is not bigger than $2\sum_{l=1}^L \mathbb{E}[T_l]$.

%

The random walk initially starts at the root node at distance $L$ from the target.
Define the parameters $W_t$ as the steps of the random walk: $W_t=1$ if the random walk moves one step further from the target at time $t$, $W_t=-1$ if
the random walk moves one step closer to the target, and $W_t=0$ when the random walk does not move. Clearly, $\sum_{t=1}^\tau W_t=-L$ where $\tau$ is the
stopping time of the random walk. The random walk stops when the policy declares a leaf node as the target. For the mean value of $W_t$, from Lemma~\ref{SCofP}, we have
%
\begin{eqnarray}\nn
\mathbb{E}[W_t]&=&\mathbb{P}[W_t=1]-\mathbb{P}[W_t=-1]\\\nn
&\le& 1-2(1-p_0)^2\\\nn
&<&0.
\end{eqnarray}
Notice that if the random walk is within the subtree $\mathcal{T}_L$ at step $t$, we have
\begin{eqnarray}
\sum_{s=1}^tW_s>0.
\end{eqnarray}
Thus, we can write
\begin{eqnarray}
\mathbb{P}[T_{L}>n]&\le& \mathbb{P}\bigg[\sup\{t\ge1: \sum_{s=1}^t W_s>0\}>n\bigg]\\\nn
&\le&\sum_{t=n}^\infty \mathbb{P}\bigg[\sum_{s=1}^t W_s>0\bigg]\\\label{CI1}
&\le&\sum_{t=n}^\infty \exp\bigg(-\frac{1}{2}t(1-2(1-p_0)^2)^2\bigg)\\\nn
&=&\frac{\exp(-2n(1-2(1-p_0)^2)^2)}{1-\exp(-2(1-2(1-p_0)^2)^2)}.
\end{eqnarray}
Inequity~\eqref{CI1} is obtained by Hoeffding inequality for Bernoulli distributions.
We can obtain $\mathbb{E}[T_{L}]$ from $\mathbb{P}[T_{L}>n]$ based on the sum of tail probabilities as
\begin{eqnarray}\nn
\mathbb{E}[T_L]&=&\sum_{n=0}^\infty\mathbb{P}[T_L>n]\\\nn
&\le&\sum_{n=0}^\infty\frac{\exp(-2n(1-2(1-p_0)^2)^2)}{1-\exp(-2(1-2(1-p_0)^2)^2)}\\\nn
&=&\frac{1}{\bigg(1-\exp(-2(1-2(1-p_0)^2)^2)\bigg)^2}.
\end{eqnarray}

Let us define
\begin{eqnarray}\label{CP02}
C_{p_0}=\frac{1}{\bigg(1-\exp(-2(1-2(1-p_0)^2)^2)\bigg)^2},
\end{eqnarray}
which is a constant independent of $K$ and $\epsilon$.
From the symmetry of binary tree, it can be seen that $\mathbb{E}[T_l]\le C_{p_0}$ for all $l$ and the expected number of points visited by the random walk is upper bounded by $2 L C_{p_0}$.
Under the assumption that the informativeness of observations decreases in higher levels we can replace the sample complexity of test $\mathcal{L}$ at the highest level and arrive at the first term in~\eqref{eq9}.
The second term in~\eqref{eq9}, is obtained by direct application of Lemma~\ref{SCofP} on the sample complexity of test $\mathcal{L}$ at the target node.

It remains to show that CBRW satisfies the reliability constraint. We know that at each visit of a leaf node the probability of declaring a non-target node as the target is lower than $\frac{\epsilon}{2LC_{p_0}}$ by the design of the test at leaf nodes. Thus, from the upper bound on the expected number of points visited by the random walk we have

\begin{eqnarray}\nn
\mathbb{P}[\mathcal{S}_\delta\neq\{(k_0,0)\}]&\le& 2L C_{p_0}\frac{\epsilon}{2L C_{p_0}}\\\nn
&=&\epsilon.
\end{eqnarray}

Order optimality of $\log K$ follows from information theoretic lower bound. Order optimality of $\log \frac{1}{\epsilon}$ can be established following standard techniques
in sequential testing problems.

An upper bound on the sample complexity of CBRW under hierarchical target setting can be obtained similarly. The trajectory of the
random walk can be analyzed by considering the last passage times $T_l$ of the random walk from subtrees $\mathcal{T}_l$
for $l=l_0+1,...,L$, as well as the last passage times $T'_1$ and $T'_2$ of the random walk from subtrees $\mathcal{T}'_1$ and $\mathcal{T}'_2$ which can be shown to not bigger than
$C^{H}_{p_0}$ following the similar lines as in the proof of Theorem~\ref{SCofRWCB} with

\begin{eqnarray}\label{CHP0}
C^H_{p_0}=\frac{1}{\bigg(1-\exp(-2(1-2(1-p_0)^3)^2)\bigg)^2}.
\end{eqnarray}

The analysis under hierarchical target setting differs from the analysis under leaf-level target setting in that the
consecutive calls of test $\mathcal{L}$ on the same node results
in increasing the confidence level.
We establish an upper bound on the expected total number $T_{\textrm{tot}}$
of observations from a node
at a series of consecutive calls of test $\mathcal{L}$ on the node
where the confidence level is divided by $2$ iteratively at each time.
Let $T^{(k)}$ be the number of samples taken at $k$'th consecutive call of
test $\mathcal{L}$ on a node. By design of CBRW strategy under hierarchical target setting the value of $p$ in test $\mathcal{L}$ is divided by 2 until the first time $k$ that $\frac{p_0}{2^k-1}<\frac{\epsilon}{3LC^H_{p_0}}$. Thus there are at most ${\lceil\log_2\frac{3LC^H_{p_0}p_0}{\epsilon}\rceil}$ consecutive calls of test $\mathcal{L}$ on one node. On a non-target node:

\begin{eqnarray}\nn
\hspace{-3em}\mathbb{E}[T_{\textrm{tot}}]
&\le& \sum_{k=1}^{{\lceil\log_2\frac{3LC^H_{p_0}p_0}{\epsilon}\rceil}} p_0^{k-1}\mathbb{E}[T^{(k)}]\\\nn
&\le&\sum_{k=1}^\infty p_0^{k-1}\bigg(\frac{48}{(\mu-\eta)^2}\log \frac{24\sqrt[3]{\frac{2^{k}}{p_0}}}{(\mu-\eta)^2}+2\bigg)~~~~\\\nn
&\le&\sum_{k=1}^\infty  p_0^{k-1}\bigg(\frac{48}{(\mu-\eta)^2}\log \frac{24\sqrt[3]{\frac{2}{p_0}}}{(\mu-\eta)^2}+2\bigg)\\\nn
&&~~~~~+\sum_{k=1}^\infty  p_0^{k-1}\frac{48}{(\mu-\eta)^2}\log\sqrt[3]{2^{k-1}}\\\nn
&=&\frac{1}{1- p_0}\bigg(\frac{48}{(\mu-\eta)^2}\log \frac{24\sqrt[3]{\frac{2}{p_0}}}{(\mu-\eta)^2}+2\bigg)\\
&&~~~~~+\frac{p_0}{(1- p_0)^2}\frac{16\log 2}{(\mu-\eta)^2}.
\end{eqnarray}

Upper bound on $\mathbb{E}[T_{\textrm{tot}}]$ in a conservative upper bound
on each single time that the test $\mathcal{L}$ is called.

On the target node:
\begin{eqnarray}\nn
\hspace{-3em}\mathbb{E}[T_{\textrm{tot}}]
&\le& \sum_{k=1}^{{\lceil\log_2\frac{3LC^H_{p_0}p_0}{\epsilon}\rceil}}\mathbb{E}[T^{(k)}]\\\nn
&\le&{{\lceil\log_2\frac{3LC^H_{p_0}p_0}{\epsilon}\rceil}}\bigg(\frac{48}{(\mu-\eta)^2}\log \frac{24\sqrt[3]{\frac{4}{\epsilon}}}{(\mu-\eta)^2}+2\bigg)\\
&\le&{\log_2\frac{6LC^H_{p_0}p_0}{\epsilon}}\bigg(\frac{48}{(\mu-\eta)^2}\log \frac{24\sqrt[3]{\frac{4}{\epsilon}}}{(\mu-\eta)^2}+2\bigg).
\end{eqnarray}
From the upper bound on $\mathbb{E}[T_{\textrm{tot}}]$, the upper bound on the sample complexity of
CBRW can be obtained. The satisfaction of the constraint on error probability can be shown similar to the leaf-level target setting.
\end{proof}

\end{document}